\documentclass{article}
\usepackage[final]{corl_2020}
\usepackage[utf8]{inputenc}
\usepackage{amsmath,amssymb}
\usepackage{mathtools}
\usepackage{amsthm}
\usepackage{graphicx}
\usepackage{float}
\usepackage{wrapfig}
\usepackage{subcaption}
\usepackage{subfloat}
\usepackage[ruled]{algorithm2e}
\usepackage{amsfonts,dsfont}
\theoremstyle{plain}
\newtheorem{thm}{\textbf{Theorem}}
\newtheorem{lem}{\textbf{Lemma}}

\DeclarePairedDelimiter\set{\{}{\}}

\theoremstyle{remark}
\newtheorem{rem}{\textbf{Remark}}

\newcommand{\I}{\mathcal{I}}
\newcommand{\R}{\mathcal{R}}
\newcommand{\prob}{\mathbb{P}}

\newcommand{\distribution}{\mathcal{D}}
\newcommand{\regpara}{\eta}

\newcommand{\reals}{\mathbb{R}}
\title{Reactive motion planning with probabilistic \\ safety guarantees}
\author{Yuxiao Chen, Ugo Rosolia, Chuchu Fan, Aaron D. Ames, and Richard Murray\\
 California Institute of Technology\\
  \texttt{\{chenyx,urosolia,chuchu,ames,murray.cds\}@caltech.edu} \\}

\date{March 2019}

\begin{document}
\maketitle

\begin{abstract}
Motion planning in environments with multiple agents is critical to many important autonomous applications such as autonomous vehicles and assistive robots. This paper considers the problem of motion planning, where the controlled agent shares the environment with multiple uncontrolled agents. First, a predictive model of the uncontrolled agents is trained to predict all possible trajectories within a short horizon based on the scenario. The prediction is then fed to a motion planning module based on model predictive control. We proved generalization bound for the predictive model using three different methods, post-bloating, support vector machine (SVM), and conformal analysis, all capable of generating stochastic guarantees of the correctness of the predictor. The proposed approach is demonstrated in simulation in a scenario emulating autonomous highway driving.
\end{abstract}
\section{Introduction}

Many of the important applications of autonomy contain multiple agents, many of which are not under the control of the autonomous system, and we refer to them as the uncontrolled agents. In order to achieve safe operation in a multi-agent environment, the behavior of uncontrolled agents needs to be modeled and taken advantage of. Take autonomous driving as an example. When planning the path for the autonomous vehicle, the behavior of other road users such as human-driven vehicles and pedestrians is critical to the safety of the autonomous vehicle. Modelling of uncontrolled agents has been an important research problem and received a lot of attention. The simplest setting is perhaps assuming a fixed trajectory of the uncontrolled agents \cite{kuwata2008motion}, which is obviously too optimistic and may cause collision. The other extreme is to over-approximate the reachable set of the uncontrolled agents \cite{chen2017fast}, which is typically too conservative. It has been noticed that a predictive model is needed to plan safe yet not an overly conservative motion for the autonomous vehicle. In particular, for safety-critical applications such as autonomous driving, one would want guarantees of the correctness of such models, which lead to the guarantees of successful task fulfillment. Typically, predictive models are learned from observations of the uncontrolled agents, using various representation structures such as temporal logic formulae \cite{jha2017telex,vazquez2018learning,JinX}, Gaussian process \cite{trautman2010unfreezing,aoude2013probabilistically}, Inverse Reinforcement Learning (IRL) \cite{ziebart2010modeling,sadigh2016planning,Kuderer2015}, and Generative Adversarial Network (GAN) \cite{bhattacharyya2018multi}. 

For agents that exhibit complex behavior, accurately modeling their behavior is almost impossible due to limitations such as the expressiveness of the model, the amount of data required, and above all, the nondeterministic nature of such agents. For instance, a human is a typical type of nondeterministic agent whose behavior under the same scenario is usually inconsistent. One way to deal with this inconsistency is to fit a probabilistic model. In particular, Markovian models such as Markov chains \cite{althoff2009model}, hidden Markov models \cite{kumagai2006prediction}, and partially observed Markov decision processes \cite{bai2015intention} are a popular choice since they simplify the reasoning with the Markovian property. Another class of approaches is to directly model the nondeterministic behavior with set-based methods, including the GAN-based prediction \cite{bhattacharyya2018multi}, SVM approach \cite{chen2018modelling,chen2019counter}, and the Covernet \cite{phan2019covernet}, which uses a neural network. If such a predictive model has a high probability of including all possible behavior of the uncontrolled agent, safety can be guaranteed as long as the motion planner plans a trajectory that is not in collision with the predicted trajectories. For this reason, this class of approaches is more amenable to safety-critical applications.

One important issue to consider is the compatibility of the predictive model with motion planning methods. As a counterexample, predictive models for instantaneous actions such as \cite{chen2019counter,bhattacharyya2018multi} are not compatible with most of the popular motion planning tools such as rapid random tree (RRT) \cite{lavalle2001rapidly} and model predictive control (MPC)~\cite{borrelli2017predictive} since these motion planning tools typically consider a finite horizon, i.e., they conduct sequential decision making. If one were to use such a predictive model in a sequential decision-making framework, according to the predictive model, the set of possible actions for future steps depends on the scenario in the future, which is a function of the actions of both the controlled agent and the uncontrolled agents, thus nondeterministic. Therefore, the whole horizon planning problem becomes convoluted and nondeterministic. In addition, the representation of the predictive model typically consists of complicated functions for the sake of expressibility, which poses additional difficulty to the horizon planning problem. There exist methods that provide prediction with a finite horizon, such as clustering and classification with a finite set of trajectories \cite{vasquez2004high,shah2016resolution} and the Covernet \cite{phan2019covernet}. However, to the best of our knowledge, no existing predictive model for motion planning provides guarantees on the correctness of the prediction.

In this paper, we propose a classification approach that learns a predictive model capable of predicting possible trajectories within a short horizon, and we are able to prove a probabilistic guarantee on the correctness of the prediction. To do so, the output of the predictive model is a set of possible trajectories with a fixed horizon, which can be directly utilized by the motion planner to plan safe trajectories. Different from the existing research on data-driven verification of autonomous systems \cite{Balkan,DRYVR,sofiedatadriven}, this paper focuses on generating a nondeterministic predictive model that is probabilistic correct, and compatible with the common motion planning techniques, and designing a motion planning module that plans safe motion for the controlled agent by leveraging the trajectory predictions from the predictive model. To summarize, our main contributions are:

\begin{itemize}
    \item A learning framework that generates future trajectory predictions with probabilistic guarantees of correctness via three post-processing methods
    \item A model predictive controller that leverages the reactivity of uncontrolled agents to generate safe trajectories
    \item Demonstration of the proposed methodology with real highway driving data and simulation of the reactive MPC controller.
\end{itemize}

The paper is structured as follows. First, we present the setup for the predictive model learning, including the generation of the trajectory basis and the affordance approach for scenario description in Section \ref{sec:setup}. Section \ref{sec:NN} presents the neural network structure and proves the generalization bound of the predictive model via several post-processing methods. Section \ref{sec:MPC} presents the model predictive controller leveraging the trajectory prediction, and finally simulation result is shown in Section \ref{sec:sim}.

\section{Setup of the predictive model learning}\label{sec:setup}
\subsection{Trajectory prediction with basis}
One main difficulty of trajectory prediction is the high dimension of the output. Suppose the state $x\in\mathbb{R}^{n_x}$, then a trajectory $\mathbf{x}$ containing $T$ time steps belongs to $\underbrace{\mathbb{R}^{n_x}\times\dots\mathbb{R}^{n_x}}_\text{T}$, whose dimension is $n_xT$. The output dimension quickly becomes too big as $T$ increases and learning such a predictive model becomes intractable.

Inspired by \cite{phan2019covernet}, a trajectory basis method is used to reduce the output dimension. The idea is to use a trajectory basis consisting of a finite collection of base trajectories to represent all possible trajectories with a fixed horizon up to a certain difference threshold. The output of the predictive model is then a binary vector, indicating whether each of the base trajectories is possible under the scenario. Different from \cite{phan2019covernet}, our goal is not to find the most probable trajectory, but instead to find a set containing all possible trajectories in a given scenario, and to provide a probabilistic guarantee on the learned predictive model in terms of covering all possible behavior of the human-driven vehicle, assuming that the training data is a reasonable sample of such behavior.

We let $\mathbf{x}$ denote a state trajectory where $\mathbf{x}(t)\in\mathbb{R}^{n_x}$. Given a training set $\Omega$ consisting of state trajectories lasting for a fixed horizon $T$, we would like to cover all trajectories observed with a finite set of trajectories $\mathcal{B}:=\{\mathbf{x}_i\}_{i=1}^M$, where $M$ is the cardinality. $\mathcal{B}$ is also called the trajectory basis, and satisfies
\begin{equation*}
    \forall \mathbf{x}\in \Omega,~\exists \mathbf{x}_i\in\mathcal{B}~\mathrm{s.t.}~\left\|\mathbf{x}-\mathbf{x}_i\right\|_\mathcal{A} \le \epsilon, 
\end{equation*}
where $||\cdot||_\mathcal{A}$ denotes the atomic norm with atom set $\mathcal{A}$, see \cite{chandrasekaran2012convex} for more details. 
As an example, Fig. \ref{fig:traj_base} shows one base trajectory with the uncertainty region at each sampling time, and another trajectory shown in a dashed line that is close enough to the trajectory base such that the position at each sampling time falls into the uncertainty region. The size of the uncertainty region is determined by the atom set $\mathcal{A}$. With $\mathcal{A}$ fixed, for any two trajectories $\mathbf{x}$ and $\mathbf{x}'$, we can compute their distance in terms of the atomic norm. For a given training set $\Omega$ and a given threshold $\epsilon$, we can compute a graph where each trajectory within the set is a node, and an edge exists between two nodes if and only if their distance is below $\epsilon$. Next, we want to find a trajectory basis $\mathcal{B}$ with a minimum cardinality that covers all trajectories in $\Omega$, i.e., for each trajectory segment in $\Omega$, there exists a base trajectory in $\mathcal{B}$ that is $\epsilon$-close. We say such a trajectory basis $\mathcal{B}$ is an $\epsilon$-covering of the training set $\Omega$, and the process of identifying such a trajectory basis is called sparsification. This is a well-studied set covering problem. Although solving for the exact solution is NP-hard, a greedy algorithm is shown to perform well enough due to the submodularity of the problem, c.f. \cite{fujito2000approximation}. For the example application of this paper, the highway driving problem, we extracted more than 140000 trajectory segments that lasted for 3 seconds on US-101 highway from the NGSIM project \cite{alexiadis2004next}, and the trajectory basis contains 17 base trajectories after sparsification with a greedy algorithm.

\begin{rem}
To make the parameterization less sensitive towards longitudinal speed, each of the base trajectories is represented as the position difference between the actual trajectory and the imaginary trajectory if the vehicle were to follow the current speed forward. In this way, the base trajectory represents the deviation from normal driving, and thus, we do not need multiple basis trajectories for similar driving behavior under different longitudinal speed.
\end{rem}

\begin{figure}
    \centering
    \includegraphics{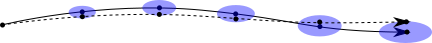}
    \caption{Example base trajectory and uncertainty region, where the solid trajectory is a base trajectory $\mathbf{x}_i\in\mathcal{B}$, and the dashed line is a trajectory in $\Omega$, the blue ellipses are the uncertainty region determined by the atom set $\mathcal{A}$.}
    \label{fig:traj_base}
\end{figure}

\begin{figure}[ht]
\hspace{-0.3cm}\begin{minipage}[b]{0.35\textwidth}
\centering
  \includegraphics[scale =0.37]{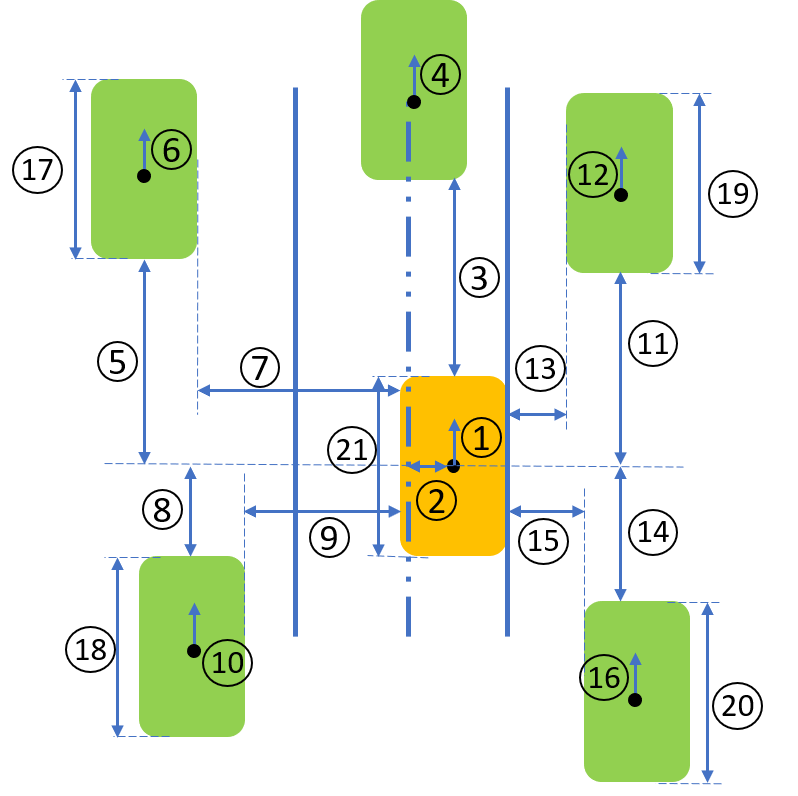}
\end{minipage}
\noindent\begin{minipage}[t]{0.6\textwidth}
\vspace{-4.5cm}
 \centering
  \begin{tabular}{|c|c|c|c|}
1&Forward velocity                &12 & Right front velocity   \\
2&Distance to lane center          &13  & Right front X clearance            \\
3&Forward clearance                &14 & Right rear Y clearance     \\
4&Forward vehicle velocity         &15 & Right rear X clearance     \\
5&Left front Y clearance           &16      & Right rear velocity       \\
6&Left front velocity              &17        & Left front vehicle length   \\
7&Left front X clearance      &18 & Left rear vehicle length\\
8 & Left rear Y clearance     &19& Right front vehicle length\\
9 & Left rear X clearance    &20 & Right rear vehicle length\\
10 & Left rear velocity     &21& Ego-vehicle length\\
11 & Right front Y clearance & & \\
\end{tabular}
\end{minipage}
\caption{Definition of the affordance features with graphic illustrations}
    \label{fig:affordance}
    \vspace{-0.5cm}
\end{figure}

\subsection{Affordance}\label{sec:affordance}
The input to the predictive model describes the scenario, including information about the ego-vehicle (the vehicle whose future motion is to be predicted) and the adjacent vehicles. In \cite{chen2019counter}, the lane-change scenario was described by a single state vector since only one nearby vehicle was considered. However, when multiple adjacent vehicles are involved, traditional multiagent state representation (stacking states of all agents) is not applicable because (i) the number of agents in a scenario is not fixed (ii) the representation is not permutation invariant, i.e., the total state vector changes as the ordering of agents changes, which is problematic for both training and post-processing. One prominent example of a permutation invariant description is graphic representations (and similarly occupancy grid), which was used in \cite{phan2019covernet}. Other examples include the occupancy network \cite{mescheder2019occupancy} and the Pointnet \cite{qi2017pointnet}, both employing special neural network structures.

Since highway driving is considered in this paper, which happens in a highly structured environment, we propose the affordance approach inspired by \cite{chen2015deepdriving}, which is easy to implement and analyze. Affordance is simply a collection of features that describes the driving scenario on the highway, as shown in Fig. \ref{fig:affordance}. It is a comprehensive representation of the driving scenario of the ego-vehicle, including the position of surrounding vehicles and the environment, and it is applicable to an arbitrary number of adjacent vehicles since when a certain vehicle is missing in one position, the corresponding affordance feature simply take the default value, showing that there is no vehicle in that position. Obviously, the affordance representation is permutation invariant by nature.

\subsection{Collision check and output label}\label{sec:collision_check}

Given a scenario described by the affordance vector, we can perform a collision check assuming all vehicles maintain their current speeds, which is implemented with simple algebraic calculations. For each data point in the training set consisting of the affordance vector and the trajectory segment, there are three possible output flags. Firstly, the base trajectory with the minimum distance to the trajectory segment of the data point is marked as positive, with the output flag equal to 1. Note that the base trajectory with minimum distance is always $\epsilon$-close to the data point since the trajectory basis is an $\epsilon$-covering of the training set. For the rest of the base trajectories, we perform collision checks and separate them into two groups, ones that cause collision within the finite horizon and ones that do not. Although the output flag for both groups is zero, later in training, they are given different weights, i.e., the false positive corresponding to base trajectories that lead to collisions are more heavily penalized. To differentiate the two groups, we say the output flag is $0$ if the base trajectory does not lead to a collision, and $\bar{0}$ if it does. The output then is a vector with the $i$th entry equal to one and all other entries equal to zero, where $i$ is the index of the corresponding base trajectory.

\section{Neural network classification with generalization bound}\label{sec:NN}

\subsection{Neural network training}\label{sec:nn_training}
With the input and output of the predictive model specified, a classifier $f:\mathbb{R}^m\to \mathbb{R}^M$ that outputs a binary vector is trained, where $m$ is the dimension of the affordance vector and $M$ is the cardinality of the trajectory basis. We chose a neural network for its strong expressibility, and the neural network is a simple feedforward network with two hidden layers using ReLU as the activation function followed by a sigmoid layer so that the output is a vector with entries between 0 and 1. Let $x$, $y$ be the input and output of the neural network, respectively, and $z$ be the ground truth of the output. As discussed in Section \ref{sec:collision_check}, the entries of $z$ can be 1, 0, or $\bar{0}$. The loss function is then defined as
\begin{equation*}
    J(y,z) = \sum\limits_{i=1}^N w_1\mathds{1}_{z_i=1}\mathrm{ReLU}(\gamma_1-y_i)+w_0\mathds{1}_{z_i=0}\mathrm{ ReLU}(y_i-\gamma_2)+w_{\bar{0}}\mathds{1}_{z_i=\bar{0}}\mathrm{ReLU}(y_i-\gamma_2),
\end{equation*}
where $\gamma_1,\gamma_2\in[0,1]$ are constants chosen to add robustness to the classifier, $w_1,w_0,w_{\bar{0}}$ are the weights for the three classes of outputs and $\mathrm{ReLU}(x)=\max(0,x)$ is the standard ReLU function. $w_{\bar{0}}$ is chosen to be higher than $w_0$ so that false positive predictions corresponding to trajectories leading to collisions are penalized more heavily.

\subsection{Generalization bound}
Since the application is safety-critical, we would like to obtain generalization bound for the predictive model. Generalization bounds based on VC-dimensions~\cite{shalev2014understanding} cannot be applied directly here since computing VC-dimension for Neural Networks is a challenging problem and the bounds for such VC-dimensions are usually very coarse~\cite{sontag1998vc,bartlett2003vapnik}. Moreover, we care more about the upper bound on the false-negative rate than the false-positive since the former leads to guarantee on safety while the latter simply means conservatism.
Here we provide three alternative methods to obtain the generalization bound for a neural network, post-bloating, support vector machine retraining, and conformal regression. We start by reviewing the theory of random convex program.

Let $P[K]$ denote a (minimization) optimization problem with a known objective function and constraint set $K$, and let $\text{Obj}[K]$ denote the optimal objective value of $P[K]$. A constraint $k$ is a supporting constraint if $\text{Obj}[K\backslash \set{k}]<\text{Obj}[K]$.
The setup for an RCP is the following:

\begin{equation}\label{eq:RCP}
  \begin{aligned}
\min \;&J(\alpha)\;\\
\rm{s.t.}~& \alpha\in Q(\delta_i), \forall {\delta_1},...,{\delta_N}\;\textrm{i.i.d samples of } \delta,
\end{aligned}
\end{equation}
where $\alpha\in\mathbb{R}^n$, $Q(\delta_i)\subseteq \mathbb{R}^n$ is a convex set determined by $\delta_i$, and $J(\alpha)$ is convex. $\delta\in\Delta$ is a random variable in the space $\Delta$ and $\left\{\delta_i\right\}$ are independently identically distributed samples of $\delta$. Each $\delta_i$ would pose a convex constraint on $\alpha$. If we randomly draw $N$ samples of $\delta$, and denote it as $\omega\doteq\delta_{1:N}\in\Delta^N$, then let $Q(\omega)\doteq\bigcap\nolimits_{i=1}^{N}{Q(\delta_i)}$, define
\begin{equation}\label{eq:V_def}
  V^*(\omega) = \mathbb{P}\left\{\delta\in\Delta:\text{Obj}([Q(\omega),Q(\delta)])>\text{Obj}[Q(\omega)]\right\},
\end{equation}
 which is the probability that an additional sample added on top of $\omega$ would change the objective value of the original optimization with constraints determined by $\omega$. \cite{calafiore2010random} gives upper bound on $\mathbb{P}(V^*(\omega)\ge\epsilon)$ given $1\ge\epsilon>0$ for a randomly drawn sequence of samples $\omega$. First, we recall the following relevant lemma from \cite{calafiore2010random}:
 \begin{lem}\label{lemma:RCP}
  Consider the random convex program in \eqref{eq:RCP} where $\alpha\in\mathbb{R}^n$. When $N\ge \zeta$, $
  \mathbb{P}\left\{\omega\in\Delta^N:V^*(\omega)>\epsilon\right\}\le\Phi(\epsilon,\zeta-1,N)\le\Phi(\epsilon,n,N),
$
where $\zeta$ is the Helly's dimension denoting the maximum number of supporting constraints, which is bounded by $n+1$, and $
    \Phi(\epsilon,k,N)=\sum\nolimits_{j=0}^{k}\binom{N}{j}\epsilon^j(1-\epsilon)^{N-j}$ is the cumulative distribution of a binomial random variable, that is, the probability of getting no more than $k$ successes in $N$ Bernoulli experiments with success rate $\epsilon$.
 \end{lem}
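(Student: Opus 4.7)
The plan is to follow the classical scenario optimization argument due to Calafiore and Campi. The bound is obtained by combining three ingredients: a geometric bound on the number of support constraints via Helly's theorem, a decomposition of the bad event $\{V^*(\omega) > \epsilon\}$ according to which subset of the $N$ samples actually supports the optimum, and an exchangeability-plus-counting argument that converts the decomposition into a binomial-tail estimate.

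First I would verify that for every realization $\omega$ the number of support constraints is at most the Helly dimension $\zeta$, and that $\zeta \leq n+1$ since $\alpha \in \reals^n$ and the sets $Q(\delta_i)$ are convex; this is the standard Helly-based argument (if more than $n+1$ constraints were essential, intersecting the appropriate sublevel sets of $J$ with the constraint sets would contradict Helly's theorem in $\reals^n$). Next I would partition $\Delta^N$ by the identity of the support set: for each $k \in \{0,\ldots,\zeta\}$ and each $I \subseteq \{1,\ldots,N\}$ with $|I|=k$, let $A_{I,k}$ be the event that the support constraints are exactly the samples indexed by $I$. On $A_{I,k}$ the optimum $\alpha^*(\omega)$ coincides with the optimum $\alpha^*_I$ of the subproblem with constraints $\{Q(\delta_i)\}_{i\in I}$, and a short argument identifies $V^*(\omega)$ with the violation probability $\prob\{\delta : \alpha^*_I \notin Q(\delta)\}$. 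Moreover, each of the $N-k$ non-support samples must satisfy $\alpha^*_I \in Q(\delta_j)$ by the very definition of ``non-supporting''.

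The key probabilistic step is then to condition on $(\delta_i)_{i\in I}$: by independence of the remaining samples, on $A_{I,k} \cap \{V^*(\omega) > \epsilon\}$ the conditional probability that all $N-k$ non-support samples lie in the safe set is bounded by $(1-\epsilon)^{N-k}$. Exchangeability implies that $\prob(A_{I,k})$ depends only on $|I|$, so summing over the $\binom{N}{k}$ subsets $I$ and over $k = 0,\ldots,\zeta-1$ yields a raw bound of the form $\sum_k \binom{N}{k}(1-\epsilon)^{N-k}$. The main obstacle I anticipate is sharpening this raw sum into the cumulative binomial $\Phi(\epsilon,\zeta-1,N)$ with its missing $\epsilon^k$ factors; this requires the more delicate ``wait-and-judge'' style refinement that bookkeeps the probability that a randomly chosen $k$-subset actually coincides with the true support, at which point factors of $\epsilon^k$ appear naturally and the sum collapses into $\Phi$. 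Once the first inequality is in hand, the second, $\Phi(\epsilon,\zeta-1,N) \leq \Phi(\epsilon,n,N)$, is immediate from $\zeta - 1 \leq n$ together with monotonicity of $\Phi(\epsilon,\cdot,N)$ in its second argument.
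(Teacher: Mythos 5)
The paper does not actually prove this lemma: it is imported verbatim as Theorem~3.3 of Calafiore's \emph{Random Convex Programs} (the cited \texttt{calafiore2010random}), so the only ``proof'' in the paper is the citation. Your proposal is therefore an attempt to reconstruct that theorem's proof, and while the overall strategy you describe (Helly bound on the number of support constraints, decomposition over candidate support sets, exchangeability) is indeed the skeleton of the Calafiore/Campi--Garatti argument, the step you yourself flag as ``the main obstacle'' is in fact the entire content of the theorem, and the route you propose for closing it does not work. The sum $\sum_k \binom{N}{k}(1-\epsilon)^{N-k}$ you arrive at is the old, much weaker Calafiore--Campi-style union bound; the exact binomial tail $\Phi(\epsilon,\zeta-1,N)$ is \emph{not} obtained by ``sharpening'' that sum. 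In the actual proof one first treats the nondegenerate, fully supported case and shows that there $V^*(\omega)$ has exactly a Beta$(\zeta, N-\zeta+1)$ distribution --- this is where the $\epsilon^j$ factors come from, via an identity for the joint probability that a fixed $\zeta$-subset is the support set \emph{and} its violation probability lies in $[v, v+dv]$, integrated against $(1-v)^{N-\zeta}$ --- and then reduces the general (possibly degenerate, not fully supported) case to that one by a regularization/perturbation argument. None of this machinery appears in your sketch, and the ``wait-and-judge'' refinement you invoke is a different, later result (an a posteriori bound conditioned on the \emph{observed} number of support constraints), not the tool needed here.

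Two smaller points. First, the quantity $V^*(\omega)$ in the paper's equation~\eqref{eq:V_def} is the probability that a new sample changes the \emph{objective value}, which is dominated by the constraint-violation probability $\mathbb{P}\{\delta : \alpha^*(\omega)\notin Q(\delta)\}$ that your argument actually bounds; that inequality goes in the right direction, but you should state it rather than silently identify the two. Second, your Helly argument for $\zeta\le n+1$ and the final monotonicity step $\Phi(\epsilon,\zeta-1,N)\le\Phi(\epsilon,n,N)$ are fine. Given that the paper treats this as a quoted black-box result, the appropriate fix is either to cite Calafiore (2010) as the paper does, or to commit to reproducing the full Beta-distribution argument for the fully supported case together with the degeneracy reduction; the sketch as written leaves the decisive step unproven.
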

 This is Theorem 3.3 in \cite{calafiore2010random}, which shows that the result of the RCP is likely to be true for unseen $\delta$ drawn from the same distribution under large $N$ and small $n$.

Applying the RCP theory on our problem, the post-bloating procedure can be described as the following procedure:


\begin{itemize}
    \item Split the training set into two sets $\Omega_1$, $\Omega_2$ with $N_1$ and $N_2$ data points
    \item Use $\Omega_1$ to train a neural network as described in Section \ref{sec:nn_training}
    \item Post-bloat $f$ using $\Omega_2$ with the following optimization
\begin{equation}
\begin{aligned}
    c^\star=\mathop{\min}\limits_{c\in\mathbb{R}^M} & \sum c_i \\
    \mathrm{s.t.}~&\forall (x,z)\in\Omega_2, (z_i=1)\Rightarrow [f(x)]_i>=c_i,
\end{aligned}
\end{equation} 
which is obtained by simply taking the minimum over the output of $f$ with input being $\Omega_2$.
\item Use $c^\star$ as the threshold of $f$, i.e., $[f(x)]_i\ge c_i^\star$ implies the $i$th basis trajectory is possible, otherwise impossible.
\end{itemize}

Then we have the following theorem:

\begin{thm}\label{thm:RCP}
Suppose $\Omega_2$ is a data set consisting of i.i.d. sampled data points, let $f,c^\star$ be the classifier obtained with the post-bloating procedure, then for an unseen data point $(x,z)$ from the same distribution,
\begin{equation*}
     \mathbb{P}\left\{    \mathbb{P}\left\{\exists z_i=1\wedge [f(x)]_i<c_i^\star\right\}>\epsilon       \right\}\le\Phi(\epsilon,M+1,N_2)
\end{equation*}
\end{thm}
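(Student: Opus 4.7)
The plan is to recognize the post-bloating procedure as a random convex program in the sense of Lemma \ref{lemma:RCP}, and then to identify the inner probability in the theorem statement with $V^*(\omega)$ from \eqref{eq:V_def}. First, I would rewrite the post-bloating optimization in the standard RCP form of \eqref{eq:RCP}. Let the decision variable be $\alpha \coloneqq c \in \reals^M$ (so $n = M$), take the convex (linear) objective $J(c) = -\sum_{i=1}^M c_i$ -- equivalent to maximizing $\sum c_i$, since the paper's displayed ``$\min \sum c_i$'' under upper-bound constraints is unbounded below as written -- and associate to each sample $\delta = (x, z)$ the convex set
\begin{equation*}
Q(\delta) = \set{c \in \reals^M : c_i \le [f(x)]_i \text{ for every } i \text{ with } z_i = 1}.
\end{equation*}
Since this is an intersection of at most $M$ halfspaces, $Q(\delta)$ is convex, and the optimum over $Q(\omega) = \bigcap_{\delta \in \Omega_2} Q(\delta)$ is the unique componentwise solution $c_i^\star = \min\set{[f(x)]_i : (x,z) \in \Omega_2,\ z_i = 1}$, exactly the post-bloating rule described textually in the excerpt.

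Next, I would match $V^*(\omega)$ to the event of interest. For a fresh sample $\delta = (x,z)$, adding $Q(\delta)$ strictly improves the objective if and only if $c^\star \notin Q(\delta)$: since the objective is coordinate-separable, any violated constraint strictly lowers one coordinate of the optimum while leaving the others unchanged. The event $c^\star \notin Q(\delta)$ is precisely ``there exists $i$ with $z_i = 1$ and $c_i^\star > [f(x)]_i$,'' so
\begin{equation*}
V^*(\omega) = \prob\set{\delta : \exists i \text{ s.t. } z_i = 1 \wedge [f(x)]_i < c_i^\star},
\end{equation*}
which is exactly the inner probability appearing in the theorem. Applying Lemma \ref{lemma:RCP} with $n = M$, so that Helly's dimension satisfies $\zeta \le n + 1 = M + 1$, gives $\prob\set{\omega : V^*(\omega) > \epsilon} \le \Phi(\epsilon, \zeta - 1, N_2) \le \Phi(\epsilon, M + 1, N_2)$, which is the claimed bound.

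The main obstacle I anticipate is the clean RCP reformulation in the first step: as literally written, ``$\min \sum c_i$'' with only upper-bound constraints on each $c_i$ is unbounded below, so one has to flip the sign (or switch to maximization) and then verify that the resulting optimum still reproduces the componentwise minimum $c_i^\star$ the text claims. A secondary subtlety is the \emph{strict}-inequality requirement in the definition of $V^*$: coordinate separability makes the strict change immediate for the linear objective above, but one should note that for a general convex $J$ this step would need additional argument (e.g., uniqueness of the minimizer). Once these bookkeeping points are handled, the theorem is an essentially immediate corollary of Lemma \ref{lemma:RCP}.
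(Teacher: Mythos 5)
Your proposal is correct and follows essentially the same route as the paper: the paper's proof is a two-line appeal to Lemma \ref{lemma:RCP}, observing that post-bloating is an RCP with decision variable $c\in\mathbb{R}^M$ and Helly's dimension at most $M+1$. You supply the details the paper omits --- notably the identification of the inner probability with $V^*(\omega)$ via coordinate separability, and the correct observation that the displayed ``$\min\sum c_i$'' must be read as a maximization for the componentwise-minimum solution $c_i^\star$ to emerge --- which strengthens rather than departs from the paper's argument.
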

\begin{proof}
The post bloating process is a random convex program with $N_2$ constraints drawn i.i.d. from a distribution, the Helly's dimension is upper bounded by $M+1$ since the decision variable $c\in\mathbb{R}^M$ by Lemma 2.3 of \cite{calafiore2010random}. Therefore, the conclusion follows from Lemma \ref{lemma:RCP}.
\end{proof}

The SVM approach works similarly, and we put a detailed description in the supplementary material. 
Conformal regression employs a different theory, yet the procedure is very similar to post-bloating as it also splits the training set, uses one to train the neural network, and uses the other to adjust the threshold. We put the theory and procedure of conformal regression in the supplementary material. In practice, we found the post-bloating giving similar performance to the SVM and conformal regression approach, and yet the former method is much easier to implement and has a smaller Helly's dimension, which leads to a stronger probabilistic guarantee on the correctness of the predictive model. Thus we only include the result using the post-bloating method in this paper.

For the highway autonomous driving example, we use a separate data set to calculate the empirical probability of false negative predictions after post-bloating with data sets with different sizes. $\epsilon_{99\%}$ denotes the upper bound on the false negative rate with confidence $99\%$, i.e., with probability $99\%$, the false negative rate is below $\epsilon_{99\%}$. As shown in Table \ref{tab:RCP}, the false negative rate decreases as  all the empirical false negative rate decreases as the number of data points used for post-bloating increases, and the empirical rate is always upper bounded by the RCP bound $\epsilon_{99\%}$, since $99\%$ is a pretty high confidence level. 

\begin{table}[]
\caption{False negative rate after post-bloating with data sets with various sizes}
\centering
\begin{tabular}{c|c|c}

Size of post-bloating data set & $\epsilon_{99\%}$ by RCP & \multicolumn{1}{l}{Empirical false negative rate} \\ \hline
15946                         & 0.19\%                   & 0.093\%                                           \\
23919                         & 0.103\%                  & 0.05\%                                            \\
31893                         & 0.077\%                  & 0.05\%                                            \\
39866                         & 0.062\%                  & 0.035\%                                           \\
47839                         & 0.052\%                  & 0.032\%                                           \\
55813                         & 0.044\%                  & 0.014\%                                           \\
63786                         & 0.039\%                  & 0.016\%                                     
\end{tabular}

\label{tab:RCP}
\vspace{-0.5cm}
\end{table}



Of course, we are limited by the amount of data available and the rate of false negative is not small enough for realistic self-driving applications yet (human drivers typically experience 1 death in 100 million miles of driving). However, there is an important distinction between prediction error and fatal crash since An unpredicted maneuver by an uncontrolled vehicle does not necessarily lead to an accident, let alone a fatal one. The relationship between the error of the predictive model and an actual crash caused by the prediction error needs further research.

\section{Model predictive control with a predictive model}\label{sec:MPC}
In this section we present a motion planning algorithm based on Model Predictive Control (MPC) that works with the predictive model. We assume that the perception module and the predictive model give us the current and predicted positions of the $N_{\textrm{obst}}$ uncontrolled vehicles $(X^{i}_k, Y^{i}_k)$ for $i\in\{1, \ldots, N_{\textrm{obst}}\}$ and $k \in \{0, \ldots, N\}$.  
Then for each vehicle, we compute an ellipse with semi-axes $(a^i, b^i)$, which contains the predicted future position of the $i$th vehicle with the error bound, and we design an MPC strategy to compute the control action. The Euler discretized Dubin's car model is used as the dynamic model:
\begin{equation}
    x_k=\begin{bmatrix}
    X_k &Y_k& v_k &\psi_k
    \end{bmatrix}^\intercal,\quad x_{k+1}=f(x_k,u_k)=x_k +\begin{bmatrix}
    v_k \cos\psi& v_k \sin\psi& a_k&r_k 
    \end{bmatrix}^\intercal dt,
\end{equation}
where $X_k,Y_k,v_k,\psi_k$ are the longitudinal, lateral position, vehicle velocity, and heading angle at time $k$. The input $u_k$ are the acceleration $a_k$ and yaw rate $r_k$.

At each time $t$, we solve the following finite-time optimal control problem:
\begin{equation}\label{eq:FTOCP}
\begin{aligned}
\min_{\substack{ u_0,\ldots, u_{N-1} \\ \gamma_0,\ldots, \gamma_{N-1} }}\quad & \sum_{k=0}^{N-1} \big( h(x_k,u_k) + c_k \gamma_k^2 \big) + Q(x_N) \\
\text{s.t.} \quad \quad & x_{k+1} = f(x_k, u_k) \\
& \Bigg(\frac{X_k - X_k^{i,j}}{a^i+a}\Bigg)^2 + \Bigg(\frac{Y_k - Y_k^{i,j}}{b^i+b}\Bigg)^2 + \gamma_k \geq 1, u_k \in \mathcal{U} \\
& x_0 = x(t)\\
& \forall k \in \{0,\ldots, N\}, \forall j \in \{1, \ldots, N_{\textrm{pred}}^i\}, \forall i \in \{1,\ldots, N_\textrm{obst}\}
\end{aligned}
\end{equation}
where $N_\textrm{pred}^i$ denotes the number of predicted trajectories for the $i$th uncontrolled vehicle and for $j \in \{1, \ldots, N_\textrm{pred}^i\}$ we have that $[(X_0^{i,j}, Y_0^{i,j}), \ldots, (X_N^{i,j}, Y_N^{i,j})]$ represents the possible $j$th trajectory associated with the $i$th uncontrolled vehicle. $(a,b)$ are the semi-axes of the ellipse associated with the controlled vehicle and $(a^i,b^i)$ are the semi-axes of the $i$th uncontrolled vehicle.

The above finite-time optimal control problem computes a trajectory for the nominal model $x_{k+1} = f(x_k, u_k)$, which minimized the running cost $h(\cdot, \cdot)$ while avoiding the other vehicles represented by obstacles and satisfying the input constraints, for further details on the obstacle representation we refer to~\cite{rosolia2016autonomous}. Notice that in the above problem, we used the slack variable $\gamma_k$ that allows us to relax the obstacle constraint, and the weight $c_k$ is a tuning parameter that penalizes the constraint violation. Let $[u_0^*, \ldots, u_{N-1}^*]$ be the optimal solution to the finite-time optimal control problem~\eqref{eq:FTOCP} at time $t$, then we apply to the system $\pi^{\textrm{MPC}}\big( x(t) \big) = u_{t|t}^*.$
The above procedure is repeated at the next time step $t+1$ based on the new measurement $x(t+1)$.

\begin{rem}
Although by convention it is the rear vehicle's responsibility to prevent a rear-end collision, for the sake of safety, the MPC still considers an uncontrolled vehicle directly behind the controlled vehicle. This leads to some frontal or side collision when the controlled vehicle is trapped between uncontrolled vehicles, as shown in Section \ref{sec:sim}. When we chose to let the MPC ignore the uncontrolled vehicle directly behind, we observe no frontal or side collisions in simulation.
\end{rem}
\section{Simulation with MPC}\label{sec:sim}

The proposed approach is tested in a simulation environment where the uncontrolled vehicles operate autonomously in a reactive fashion. Each uncontrolled vehicle updates its trajectory by randomly selecting a trajectory from the trajectory basis that is consistent with the trajectory predictor and passes the collision check. It should be emphasized that the collision check only checks potential collisions with other uncontrolled vehicles based on their selected trajectory; and checks collision with the autonomous vehicle assuming constant velocity for the autonomous vehicle if the autonomous vehicle is directly in front of the uncontrolled vehicle, as discussed in Section \ref{sec:MPC}. This check with the autonomous vehicle is to prevent the uncontrolled vehicles from hitting the autonomous vehicle from behind since typically; it is the rear vehicle's responsibility to prevent a rear collision. In any other case, the collision avoidance task for the autonomous vehicle falls on the MPC controller with the predictive model. The autonomous vehicle then calculates the affordance for each adjacent uncontrolled vehicle, pass the affordance to the predictive model to obtain the possible trajectories, and plan its own motion with the MPC controller introduced in Section \ref{sec:MPC} by leveraging the predicted trajectories. Furthermore, the MPC optimization problem is implemented using CasADi~\cite{andersson2019casadi} for automatic differentiation and IPOPT~\cite{wachter2006implementation} to solve the nonlinear program.

The result shows that the MPC controller, together with the trajectory predictor is able to safely plan the motion for the autonomous vehicle by leveraging the reactive behavior of the uncontrolled vehicle. The animation of the simulation can be found in the supplementary material of this paper.
\begin{figure}
    \centering
    \includegraphics[width=1\textwidth]{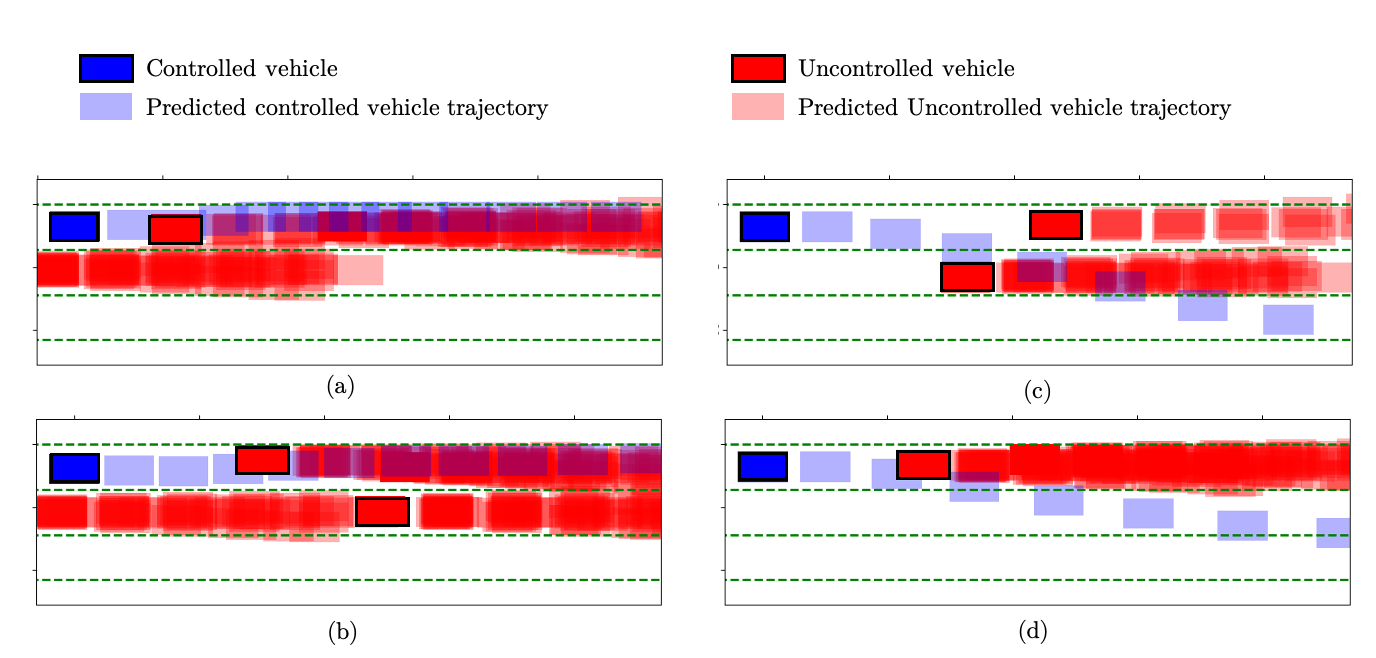}
    \caption{Snapshots of the simulation with multiple uncontrolled vehicles. The blue rectangles represent the autonomous vehicle and the predicted trajectory in light red. The uncontrolled vehicles is depicted in red and the predicted trajectory in light red.}
    \label{fig:sim_snapshot}
\end{figure}

\begin{wrapfigure}{r}{0.65\textwidth}
    \centering
    \includegraphics[scale =0.6]{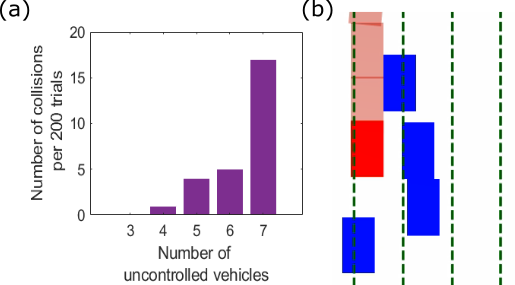}
    \caption{(a) Distribution of collisions (b) a typical collision case}
    \label{fig:stats}
\end{wrapfigure}
Fig. \ref{fig:sim_snapshot} shows the predicted MPC trajectories by the controlled vehicle in four different scenarios. The red shaded patches represent the predicted positions of the uncontrolled agents. These predicted trajectories are used to enforce the time-varying constraints in the MPC problem~\eqref{eq:FTOCP}. As the constraints are time-varying, the controller can predict to cross a location previously occupied by the uncontrolled agent. However, the MPC plans a trajectory such that the controlled and the uncontrolled vehicle do not occupy the same location at the same time. As a result, the controller slows down when is surrounded by uncontrolled vehicles (Fig.~\ref{fig:sim_snapshot} (a) and (b)) and it accelerates to perform an overtaking maneuver when it is safe to do so (Fig.~\ref{fig:sim_snapshot} (c) and (d)).

As a statistical study, we performed 1000 trials with random initial scenarios containing 3 to 7 uncontrolled vehicles (200 trials each), each lasting for 20 seconds. The MPC controller is designed to perform lane changes and overtaking maneuvers, which are challenging tasks for autonomous vehicles. The result of the 1000 trials showed seven frontal or side collisions, and 20 rear-end collisions where the distribution of collisions w.r.t. the number of uncontrolled vehicles is shown in Fig. \ref{fig:stats}(a). However, after inspection, almost all the collision cases were caused by the controlled vehicle surrounded by the uncontrolled vehicles, leaving no safe solutions to the MPC controller. An example snapshot is shown in Fig. \ref{fig:stats}(b). A video describing the sim result can be found in \href{https://youtu.be/E49TH0kPBuo}{https://youtu.be/E49TH0kPBuo}

\begin{rem}
 The simulation environment can be viewed as an over-approximation of the realistic highway environment since the trajectories for the uncontrolled vehicles are randomly picked from the trajectories deemed possible by the predictive model, which over-approximates the set of possible trajectories. This leads to some unrealistic scenarios where multiple uncontrolled vehicles trap the controlled vehicle and collision is inevitable. In future works, we plan to perform extensive testing of the proposed strategy on a more realistic simulation environment emulating the true nature of human-driven vehicles and perform a more thorough statistical study.
\end{rem}

\section{Conclusion}
We propose a predictive modeling framework that predicts possible trajectories within a short horizon for uncontrolled agents based on the scenario in an environment. By applying tools in statistics, we are able to prove a probabilistic guarantee on the correctness of the predictive model after post-processing. The predictive model is compatible with most of the existing motion planning methods, which are typically horizon-based, to plan safe motion for the controlled agent by leveraging the trajectory prediction. We apply the framework to the highway autonomous driving problem where the predictive model takes the affordance, a permutation-invariant scenario representation as the input, and outputs a vector indicating possible trajectories of an uncontrolled vehicle within a finite horizon. We then design an MPC controller that leverages the trajectory prediction to plan safe motion. A simulation study shows promising results that the proposed framework is able to safely navigate the autonomous vehicle in an environment packed with uncontrolled vehicles and perform challenging tasks such as lane change and overtaking.
\renewcommand{\baselinestretch}{0.94}
\newpage
\section{Appendix}
\subsection{SVM for post processing}
SVM was used in \cite{chen2019counter} for the training of the classifier to get generalization bound with the RCP theory, yet we found that it performed poorly with the affordance as input. The main reason is the absence of good nonlinear features. On the other hand, the neural network, as an unsupervised method for classification, is able to automatically find good features for the classification task. However, due to the complicated structure and a large number of parameters, obtaining nice generalization bound has been difficult for neural networks. We combine the two classification methods to get the advantages of both methods. To be specific, the following procedure is proposed, which uses a neural network to find the features for SVM:
\begin{itemize}
    \item Split the training set into two sets $\Omega_1$, $\Omega_2$ with $N_1$ and $N_2$ data points
    \item Use $\Omega_1$ to train a neural network as described in Section \ref{sec:nn_training} and save the network structure up to the last hidden layer, which has $\bar{n}$ neurons, denote the nonlinear function from input to the last hidden layer as $g$.
    \item Let $g$ be the nonlinear feature for the SVM and use $\Omega_2$ to train an SVM following the procedure detailed in Section 3.1 of \cite{chen2019counter}.
\end{itemize}
The generalization bound then follows in a similar fashion as Theorem \ref{thm:RCP} with the number of samples being $N_2$ and Helly's dimension being $\bar{n}+1$.

\subsection{Conformal regression for probabilistic guarantees}

In this section, we discuss a third approach to provide probabilistic guarantees for our trained classifier $f$.
Conformal regression~\cite{lei2018distribution,lei2014distribution} provides a framework to quantify the accuracy of the predictive inference in regression using conformal inference~\cite{vovk2005algorithmic}. We choose to use conformal regression since it can provide valid coverage in finite samples, without making assumptions on the distribution.

Consider i.i.d. regression data $Z_1,\cdots,Z_N$ drawn from an arbitrary joint $\distribution_{XY}$, where each $Z_i = (X_i,Y_i)$ is a random variable in $\reals^n \times \reals$, comprised of a $n$-dimensional feature vectors $X_i$ and a response variable $Y_i$. 
Conformal regression problem is to predict a new response $Y_{N+1}$ from
a new feature value $X_{N+1}$, with no assumptions on $\distribution_{XY}$.
Formally, given a positive value $\varepsilon \in (0,1)$, conformal regression techniques can construct a prediction band $C \subseteq \reals^n \times \reals$ based
on $Z_1,\cdots,Z_n$ with the property that
\begin{equation}
\label{eq:finite-sample-guarantee}
    \prob(Y_{N+1} \in C(X_{N+1})) \geq 1-\varepsilon,
\end{equation}
where the probability is taken over the $N+1$ i.i.d. draws $Z_1,\cdots,Z_N,Z_{N+1} \sim \distribution_{XY}$, and for
a point $x \in \reals^n$ we denote $C(x) = \{y \in \reals: (x, y)\in C\}$.
Such $\varepsilon$ is called the {\em miscoverage level}, and $1-\varepsilon$ is the probability threshold.

Let 
\[
\mu(x) = \mathbb{E}(Y \ | \ X = x), x \in \reals^n
\]
denote the regression function. 
The regression problem is to estimate such conditional mean of the test response $Y_{N+1}$ given the test feature $X_{N+1} = x$.  
Common regression methods use a regression model $g(x,\regpara)$ and minimize the sum of squared residuals of such model on the $N$ training regression data $Z_1,\cdots,Z_N$, where $\regpara$ are the parameters of the regression model.
The estimator for $\mu$ is given by
\[\hat{\mu}(x) = g(x,\hat{\regpara}),\]
where
$\hat{\regpara} = \mbox{argmin}_{\regpara} \frac{1}{N} \sum_{i=1}^{N} (Y_{i} - g(X_i,\regpara))^2 + \R(\regpara)$
and $\R(\regpara)$ is a potential regularizer. Common regression model
$g(x,\regpara)$ includes linear, polynomial, and neural
networks~\cite{friedman2001elements,barron1993universal}.

Let us assume that our classifier $f$ only has a $1$-dimensional output (i.e. $M=1$) for the rest of the section. When $M>1$, the analysis can be done per dimension of the output.
We use a split conformal methods from~\cite{lei2018distribution} to construct prediction intervals that satisfy the finite-sample coverage guarantees as in Equation~\eqref{eq:finite-sample-guarantee}. We begin by splitting the training data into two equal-sized disjoint subsets with indices $\I{1},\I{2}$. Then our classifier $f$ can be seen as the estimator $\hat{\mu}$ is fit to the training set $\{(X_i, Y_i)\}: i \in \I_1\}$.   
Then the algorithm compute the absolute residuals $R_i = |Y_i - f(X_i)|$ on the calibration set $\{(X_i, Y_i)\}: i \in \I_2)$. For a given miscoverage level $\varepsilon$, the algorithm rank $\{R_i: i\in \I_2\}$ and take the $\lceil (N/2+1)(1-\varepsilon) \rceil$th one as the confidence range $d$. Finally, it can be proved that the prediction interval at a new point $X_{N+1}$ is given by such $f$ and $d$ as in Lemma~\ref{thm:conformal}.

\begin{lem}[Theorem 2.2 in~\cite{lei2018distribution}]
\label{thm:conformal}
If $(X_i,Y_i),i=1,\cdots,N$ are i.i.d., $f$ is classifier trained on the training set $\I_{1}$ and $d$ is the $\lceil (N/2+1)(1-\varepsilon) \rceil$th smallest value of the residuals $\{R_i = |Y_i - \hat{\mu}(X_i)|: i\in \I_2\}$,  then for an new i.i.d. draw $(X_{N+1}, Y_{N+1})$, 
\[
\prob(Y_{N+1} \in [f(X_{N+1}) -d, f(X_{N+1}) +d ] ) \geq 1 - \varepsilon.
\]
Moreover, if we assume additionally that the residuals $\{R_i: i\in \I_2\}$ have a continuous joint distribution, then
\[
\prob(Y_{N+1} \in [f(X_{N+1}) -d, f(X_{N+1}) +d ] ) \leq 1 - \varepsilon +\frac{2}{N+2}.
\]
\end{lem}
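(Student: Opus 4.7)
The proof plan rests on the exchangeability of the calibration residuals together with the new residual. The plan is to first condition on $\mathcal{I}_1$ so that the map $f$ is frozen (nonrandom with respect to the remaining data); then the residuals $R_i = |Y_i - f(X_i)|$ for $i \in \mathcal{I}_2 \cup \{N+1\}$ are i.i.d., because the original pairs $(X_i, Y_i)$ are i.i.d.\ and $f$ depends only on the independent fold $\mathcal{I}_1$. In particular these $|\mathcal{I}_2|+1 = N/2+1$ residuals are exchangeable.

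Next I would rewrite the target event. Since the prediction interval is $[f(X_{N+1})-d, f(X_{N+1})+d]$, coverage is equivalent to $R_{N+1} \leq d$. Writing $n_2 = N/2$ and letting $R_{(1)} \leq \cdots \leq R_{(n_2)}$ denote the order statistics of $\{R_i : i \in \mathcal{I}_2\}$, we have $d = R_{(k)}$ with $k = \lceil (n_2+1)(1-\varepsilon) \rceil$. The key combinatorial observation is that $R_{N+1} \leq R_{(k)}$ if and only if the rank of $R_{N+1}$ among the combined set of $n_2+1$ residuals is at most $k$: if $R_{N+1} \leq R_{(k)}$, then at least $n_2 - k + 1$ calibration residuals exceed $R_{N+1}$, forcing its combined rank to be at most $k$, and conversely.

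The decisive step is then the exchangeability argument. Assuming temporarily that ties occur with probability zero (the continuity hypothesis of the second statement), the rank of $R_{N+1}$ in the combined sample is uniformly distributed on $\{1, \ldots, n_2+1\}$, so
\begin{equation*}
\mathbb{P}(R_{N+1} \leq d) \;=\; \frac{k}{n_2+1} \;=\; \frac{\lceil (n_2+1)(1-\varepsilon)\rceil}{n_2+1}.
\end{equation*}
The lower bound $1-\varepsilon$ follows from $\lceil x \rceil \geq x$, and the upper bound follows from $\lceil x \rceil \leq x+1$, giving $1-\varepsilon + 1/(n_2+1) = 1-\varepsilon + 2/(N+2)$. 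To remove the continuity assumption for the lower bound only, I would break ties by an independent random perturbation (or equivalently appeal to the fact that among exchangeable variables the probability of any particular rank is at most $1/(n_2+1)$), which only increases the coverage probability and hence preserves the $\geq 1-\varepsilon$ guarantee.

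The most subtle point, and the one I would write carefully, is the tie-breaking convention in the rank argument: without continuity, the rank of $R_{N+1}$ is only stochastically dominated by the uniform distribution on $\{1,\ldots,n_2+1\}$, so one obtains an inequality rather than an equality in the probability calculation, which is exactly what distinguishes the two halves of the lemma. Everything else is straightforward bookkeeping with the ceiling function and the substitution $n_2 = N/2$.
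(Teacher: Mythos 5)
The paper does not actually prove this lemma---it imports it verbatim as Theorem~2.2 of the cited reference \cite{lei2018distribution}---and your exchangeability-and-rank argument is precisely the standard proof of that theorem. Your write-up is correct, including the two points that genuinely require care: conditioning on $\mathcal{I}_1$ so that the residuals on $\mathcal{I}_2\cup\{N+1\}$ become exchangeable, and the asymmetry whereby the lower bound survives ties (via randomized tie-breaking, whose rank event is contained in the coverage event) while the upper bound needs the continuity assumption to make the rank exactly uniform.
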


Generally speaking, as we improve our classifier $f$ of the underlying regression function $\mu$ using more training samples, the resulting conformal prediction interval decreases in length. Intuitively,
this happens because a more accurate $f$ leads to smaller residuals, and conformal intervals are essentially defined by the quantiles of the (augmented) residual distribution. Theoretically, we have the following lemma to quantify the residuals as a function of the training set size. 

\begin{lem}[Theorem 2.3 in~\cite{lei2018distribution}]
Under the conditions of Lemma~\ref{thm:conformal}, there is an absolute constant $c >0$ such that for any $\varepsilon>0$,
\[
\prob\left(\left\lvert \frac{2}{N} \sum_{i \in \I_{2}} \mathds{1}\{Y_i \in [f(X_{i}) -d, f(X_{i}) +d ]\} - (1-\varepsilon) \right\rvert > \varepsilon\right) \leq 2 e^{\left(-cn^2(\varepsilon-\frac{4}{N})^2\right)}.
\]
\end{lem}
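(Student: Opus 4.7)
The plan is to interpret the statistic $\tfrac{2}{N}\sum_{i \in \I_2} \mathds{1}\{Y_i \in [f(X_i)-d,\, f(X_i)+d]\}$ as the empirical coverage rate induced by the conformal quantile $d$, and to control its deviation from the nominal level $1-\varepsilon$ via two complementary tools: (i) the exchangeability underlying Lemma~\ref{thm:conformal}, which pins down the expected coverage, and (ii) a uniform concentration inequality (DKW, or equivalently Hoeffding applied to the empirical CDF of residuals), which controls fluctuations around that expectation.

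First I would condition on the training fold $\I_1$ so that the classifier $f$ is frozen, and work with the calibration residuals $R_i = |Y_i - f(X_i)|$ for $i\in\I_2$, which are i.i.d.\ draws from a fixed distribution $F$. The key observation is that each summed indicator is simply $\mathds{1}\{R_i \le d\}$, and $d$ is defined as the $\lceil (N/2+1)(1-\varepsilon)\rceil$th order statistic of the same residuals. Consequently the empirical CDF $\hat F_{\I_2}$ of the calibration residuals satisfies $\hat F_{\I_2}(d) = 1-\varepsilon$ up to a rounding term of order $1/N$; this is where the $4/N$ offset inside the exponent originates.

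Next I would use the DKW inequality applied to the i.i.d.\ sample $\{R_i : i\in\I_2\}$ of size $N/2$ to obtain $\sup_r |\hat F_{\I_2}(r) - F(r)| \le t$ with probability at least $1 - 2\exp(-2(N/2)t^2)$. Setting $t = \varepsilon - 4/N$ and combining with the deterministic quantile-rounding bound on $\hat F_{\I_2}(d)$ yields $|F(d) - (1-\varepsilon)| \le \varepsilon$ with the same probability; this matches the probability-threshold content of Lemma~\ref{thm:conformal} in a uniform-in-$d$ form, and is exactly what is needed to translate back to the coverage statistic.

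Finally, the claimed deviation $\bigl|\tfrac{2}{N}\sum_{i\in\I_2}\mathds{1}\{R_i \le d\} - (1-\varepsilon)\bigr| > \varepsilon$ is then contained in the complement of the DKW event (plus the quantile-rounding slack absorbed into $4/N$), so the probability is bounded by the DKW tail $2\exp(-c(\varepsilon - 4/N)^2 N^k)$ for the appropriate absolute constant $c$ and exponent matching the statement. The principal obstacle is the coupling between $d$ and the summed indicators: both are functions of the same calibration sample, so one cannot apply Hoeffding directly to the sum of indicators as if they were independent Bernoullis. The resolution is precisely the use of DKW's uniform-in-threshold guarantee, which decouples the randomness of $d$ from that of the empirical CDF and lets the quantile rounding be absorbed cleanly into the $4/N$ correction.
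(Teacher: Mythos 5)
The paper offers no proof of this lemma --- it is quoted (with some notational slippage) as Theorem~2.3 of \cite{lei2018distribution} --- so I can only assess your argument on its own terms. Your guiding idea, that the dependence between the threshold $d$ and the summed indicators should be broken by a uniform-in-threshold bound such as DKW, is the right instinct for results of this type.

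The gap is in the final step. What your DKW computation actually controls is the \emph{population} coverage $F(d)=\prob(R\le d)$: combining the deterministic rounding bound $|\hat F_{\I_2}(d)-(1-\varepsilon)|\le 4/N$ with $\sup_r|\hat F_{\I_2}(r)-F(r)|\le \varepsilon-4/N$ gives $|F(d)-(1-\varepsilon)|\le\varepsilon$. But the quantity in the lemma is $\tfrac{2}{N}\sum_{i\in\I_2}\mathds{1}\{R_i\le d\}=\hat F_{\I_2}(d)$ itself, not $F(d)$, and the assertion that its deviation event ``is contained in the complement of the DKW event'' does not follow from anything you established. In fact, as your own second paragraph observes, $\hat F_{\I_2}(d)$ equals $\lceil(N/2+1)(1-\varepsilon)\rceil/(N/2)$ almost surely under the continuity assumption, hence is deterministically within roughly $4/N$ of $1-\varepsilon$; for the statement exactly as written the left-hand probability is therefore $0$ once $\varepsilon\ge 4/N$, the conclusion is a one-line consequence of rounding, and the DKW machinery does no work. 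The exponential tail is only a substantive claim when the coverage is evaluated on points \emph{not} used to define $d$ (the other fold, or a fresh sample), which is the setting of the cited theorem; there your bound on $|F(d)-(1-\varepsilon)|$ is one half of the proof, and the missing half is a second concentration step (Hoeffding or DKW applied to the indicators on that other fold) tying their empirical average to $F(d)$. Finally, no argument of this kind can produce the exponent $-cn^2(\varepsilon-4/N)^2$ as printed: $n$ is the feature dimension in this paper's notation and the rate must be in the calibration sample size; your hedged ``$N^k$'' papers over this transcription artifact rather than resolving it.
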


When using the conformal regression method, instead of using the output of the classifier $f$ directly, we use the confidence interval $[f(x) -d, f(x) +d ]$ for each base trajectory with the affordance $x$. If $1 \in [f(x) -d, f(x) +d ]$ we mark the trajectory as positive.
Note that Lemma~\ref{thm:conformal} assert marginal coverage guarantees, which should be distinguished with the conditional coverage guarantee $\prob(Y_{N+1} \in C(x) \ | \ X_{N+1} = x) \geq 1-\varepsilon$ for all $x \in \reals^n$. The latter one is a  a much stronger property and hard to be achieved without assumptions on $\distribution_{XY}$.

\begin{table}[]
\centering
\caption{Theoretical and empirical false negative rate using conformal regression analysis}
\begin{tabular}{c|c|c}
Average $d$  & Theoretical $\varepsilon$ & Empirical $\varepsilon$ \\ \hline
0.0972 & 10.0000\% & 10.1639\%
\\
 0.1491 & 1.0000\% & 1.0691\%                                            \\
 0.1681 & 0.5000\%  & 0.5352\%                                       \\
0.1838 & 0.3000\% & 0.3266\%\\
0.1950 & 0.2000\% & 0.2287 \% \\
0.2172 & 0.1000\% & 0.1269\% \\
0.2442 & 0.0400\%  & 0.0603 \%
\end{tabular}
\label{tab:conformal}
\end{table}

Following the idea of conformal regression, we use 23919 trajectories from the NGSIM dataset to calibrate the classifier $f$ with different miscoverage rate $\varepsilon$ and another independent set for validation. The results are reported in Table~\ref{tab:conformal}. The weighted average confidence range $d$ is computed as $\sum_{i=1}^{17} p_i d_i$, where $p_i$ is the percentage of the trajectories in the calibration set corresponding to the $i$th base trajectory and $d_i$ is the confidence range of the $i$th base trajectory. The empirical $\varepsilon$ is computed the portion of trajectories in the testing set whose label is not contained in the prediction interval $[f(x) -d, f(x) +d ]$. 

From Table~\ref{tab:conformal}, we can see that the weighted average confidence range $d$ across the base trajectories increases as the miscoverage rate $\varepsilon$ decreases. The empirical $\varepsilon$ is in general slightly larger then the theoretical $\varepsilon$ which was used to compute $d$ but the difference is very small. We observed that the results did not vary too much when the size of the testing set changed. 

The procedure of using conformal regression to provide probabilistic guarantee is very different from the post-bloating method discussed in Section 3.2. However, comparing with Table 1, we can see that conformal regression can provide a similar level of high assurance on the prediction result while using a fixed set of training/calibration trajectories. We also observed that although the weighted average $d$ across all base trajectories was relatively small, the actual $d$ for each base trajectories varied a lot. For example, with $\varepsilon = 1\%$, the confidence $d$s for the $17$ base trajectories vary from $0.147$ to $0.995$. The large $d$s are mainly due to the lack of data (i.e. only very small portion of the training and calibration data are associated with the corresponding base trajectory).

Comparing to post-bloating, conformal regression can provide more flexibility in terms of choosing the coverage level. However, the performance of the conformal regression-based method heavily depends on the composition of the dataset, and achieves the best result when there are sufficient number of data points for each category. While the post-bloating method is not sensitive about the composition of the dataset as it solves a single optimization that naturally factors in the distribution over categories. We plan to explore in the future how to deal with datasets with an uneven composition over categories.
\bibliography{mybib}

\end{document}